\documentclass[letterpaper, 10 pt, conference]{ieeeconf}  

\IEEEoverridecommandlockouts                              

\usepackage{graphics} 
\usepackage{epsfig} 
\usepackage{times} 

\usepackage{amsmath} 
\usepackage{amssymb}  
\usepackage{amsthm}
\usepackage{algorithm}
\usepackage{algpseudocode}
\usepackage{subcaption}
\usepackage{booktabs}
\usepackage{graphicx}
\usepackage{arydshln}
\usepackage{xcolor}
\usepackage{hyperref}

\newtheorem{theorem}{Theorem}
\newtheorem{proposition}{Proposition}

\title{\LARGE \bf
When World Models Lie: Adaptive Safety Analysis Under \\ Wrong Imaginations
}

\author{John Cao and Somil Bansal%
\thanks{This research is supported in part by the NSF CAREER program (2240163), the TRI University Research Program (URP) 3.0, and the NASA ULI program.}%
\thanks{The authors are with the Department of Aeronautics
and Astronautics, Stanford University.
{\tt\small \{johncao,somil\}@stanford.edu}}%
}

\begin{document}

\maketitle
\thispagestyle{empty}
\pagestyle{empty}

\begin{abstract}
World models offer a powerful substrate for safety reasoning in high-dimensional robotic systems, but they are also fallible: their predictions can be biased, miscalibrated, or confidently wrong. This creates a central challenge for latent-space safety filters, which often learn Hamilton-Jacobi safety value functions on the dynamics of a world model. If the world model is incorrect, the resulting value function can inherit its errors and produce overconfident safety estimates. 
Existing latent safety filters often rely on auxiliary signals such as ensemble disagreement or value-target consistency residuals for adaptation, but these signals can remain small even when the world model’s predictions deviate from observations. We propose an adaptive latent safety filter that calibrates safety reasoning using directly observed world-model error. Our method uses Adaptive Conformal Inference to construct online uncertainty sets from discrepancies between predicted and observation-inferred latent states, then evaluates safety pessimistically by minimizing the learned value function over these sets. This allows the filter to remain minimally conservative when the world model is accurate, while becoming more cautious when observations reveal model mismatch. We provide a finite-time coverage guarantee for the adaptive uncertainty radius. Through simulation and hardware experiments, we show that our method significantly reduces failures relative to state-of-the-art latent safety filters while preserving task completion. Our project page is available at \url{https://mudhdhoo.github.io/WhenWordModelsLie_project_page/}.
\end{abstract}

\section{Introduction}

Robotic systems are becoming increasingly capable of performing sophisticated tasks in complex, unstructured, and human-centric environments~\cite{kim2025cosmospolicy, intelligence2025pi_, team2025gemini}. As these systems become more capable, it is increasingly important to ensure that they can act without compromising the safety of their surroundings. Safety filters provide a mechanism for doing so by overriding or modifying actions that may lead to unsafe outcomes. Control Barrier Functions~\cite{ames2019control} and Hamilton-Jacobi (HJ) reachability~\cite{bansal2017hamilton} are two dominant frameworks for safety filtering, but their reliance on explicit dynamics models and safety constraints has historically limited their application in open-world settings.
\begin{figure}[!t]
    \centering
    \includegraphics[width=\linewidth]{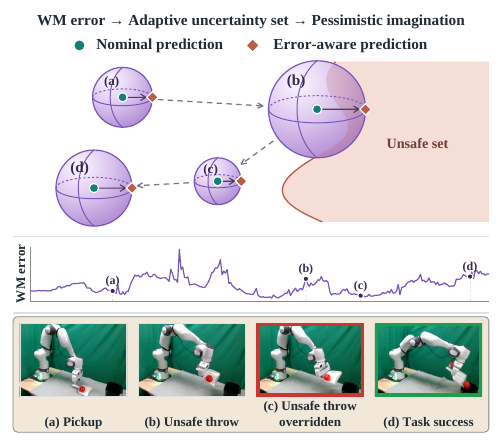}
    \caption{ 
    Our method grounds safety adaptation directly in observed prediction error. We compare the latent state predicted by the world model with the latent state inferred from the current observation, compute an uncertainty set over plausible
latent prediction deviations, and then reason \textit{pessimistically }over the resulting uncertainty set. Pessimism scales with the size of the errors, providing more caution under an inaccurate world model, while being non-invasive when the model is reliable.
\vspace{-0.5cm}
    }
    \label{fig:front_fig}
\end{figure}
Latent-space safety filters have recently emerged as a promising paradigm for safety filtering from high-dimensional observations~\cite{nakamura2025generalizing}. These methods learn a compact latent world model~\cite{hafner2019learning} and train an HJ-style safety value function using rollouts imagined by the learned latent dynamics. The resulting value function can then be used to evaluate the safety of a robot's actions from image-based observations, enabling safety filtering in settings where classical model-based methods are difficult to apply.
However, latent-space safety filtering introduces a fundamental challenge: the model used for safety reasoning is itself learned and fallible. Classical CBF and HJ reachability methods typically assume access to a known dynamics model and well-specified failure set, under which formal safety guarantees can be derived. In contrast, learned world models are only approximate representations of the real world~\cite{abbeel2006using}. They may be naturally erroneous even under nominal deployment conditions, and their errors can grow under distribution shift, unmodeled dynamics, or changes in the environment. 
Indeed, limiting errors and hallucinations in world models is an active area of research in robotics \cite{hansen2026hallucination, li2025robotic, an2026feedback}. This is especially problematic for latent safety filters because errors in the world model can propagate into the learned safety value function. An optimistic world model may imagine benign outcomes for unsafe actions, causing the value function to become overconfident and delaying safety interventions precisely when they are needed most.

Prior work has begun to address the fragility of latent safety filters under model error. World-model-based runtime monitors have been proposed to detect unsafe behaviors~\cite{ward2026foundational}, but detection alone does not resolve how the robot should intervene safely. UNISafe~\cite{seo2025uncertainty} treats out-of-distribution latent states as unsafe by using the Jensen-Rényi Divergence (JRD) of a probabilistic prediction ensemble as an uncertainty score. However, as we demonstrate in our experiments, ensemble uncertainty can remain low even when predictions are systematically biased or incorrect~\cite{berger2026biased}, leading the system to be confidently wrong. ACoFi~\cite{huriot2026safe} adapts the safety threshold using the discrepancy between the learned value function and its bootstrapped training target. Yet when the value function is trained on erroneous latent dynamics, this residual may reflect self-consistency within the incorrect model rather than disagreement with the real system. As a result, auxiliary uncertainty or value-consistency signals may fail to reveal the underlying prediction errors that compromise safety.

In this work, we propose an adaptive latent safety filter grounded in observed discrepancies between predicted and inferred latent states. Using Adaptive Conformal Inference (ACI) \cite{gibbs2021adaptive}, we construct online uncertainty sets and minimize the nominal safety value over these sets, adapting the filter’s conservatism to observed world-model error. Our contributions are as follows:
\begin{enumerate}
\item We propose an adaptive latent safety filter that uses Adaptive Conformal Inference to construct uncertainty sets from observed world-model prediction errors and evaluates safety pessimistically by minimizing the nominal value function over these sets.
\item We provide a finite-time coverage guarantee for the adaptive uncertainty radius used by our method.
\item Through both qualitative and quantitative case studies across simulation and hardware, we show how our approach overcomes the limitations of baselines and its effectiveness in reducing failures despite incorrect world models, while preserving task performance.
\end{enumerate}

\section{Problem Formulation}
We study the problem of online safety reasoning with fallible world models. Consider a latent world model
\begin{align}\label{world_model}
    z_t \sim \mathcal{E}(\cdot | \hat{z}_{t}, o_t) \hspace{0.5cm} \hat{z}_t \sim f(\cdot | z_{t-1}, a_{t-1}),
\end{align}
where $\mathcal{E}$ is an encoder that infers the current latent state from the predicted latent state $\hat{z}_{t}$ and observation $o_t$, and $f$ is a learned latent dynamics model that autoregressively predicts the next latent state. 
For instance, $f$ can be trained offline using a dataset of robot--environment interactions, consisting of observation and action trajectories \cite{ward2026foundational, nakamura2025generalizing}.
In this work, we are particularly concerned with the setting where the world model $f$ might be wrong, e.g., due to learning errors or hallucinations.

The system safety constraints are specified in the latent space via a failure set $\mathcal{F} = \{z : l(z) < 0\}$, encoded via the
zero-sublevel set of a margin function $l$. In practice, $l$ is learned as a binary classifier on known safe/unsafe trajectories \cite{nakamura2025train}.
We also assume access to a nominal safety value function $V(z)$ that encodes the safety of a latent state under the nominal world model and can be used for monitoring, planning, or filtering. 
Such a value function can be trained or computed using the latent dynamics in \eqref{world_model}, using reachability analysis for instance \cite{nakamura2025generalizing} (see Sec. \ref{sec:latent_safety_background}).

However, because the latent dynamics $f$ only approximate the true evolution of the system, the nominal value function $V$ may be miscalibrated during deployment. In particular, prediction errors in the world model can induce errors in the estimated safety value, causing the safety filter to be either overly conservative or, more dangerously, overconfident in unsafe regions. Our goal is to adapt the safety analysis online by using real-time observations to estimate the world model’s prediction error and compensate for its effect on the safety value.
Specifically, this work aims to:
\begin{enumerate}
    \item Estimate the prediction errors made by the world model during deployment.
    \item Use this error estimate to compute an adapted safety value function $\bar{V}$ that accounts for model error and reduces overconfident safety predictions.
\end{enumerate}

\section{Background}

\subsection{Latent Space Safety Filters} \label{sec:latent_safety_background}
Latent space safety filters perform HJ Reachability in the latent space of world models \cite{nakamura2025generalizing}. Given a set of failure states $\mathcal{F}$ and the margin function $l$,
HJ Reachability analysis defines the \textit{value function} under a dynamics model $f$ as the optimal control problem $V(z_t) = \sup_\pi \inf_{s \geq 0} l(\xi_{z_t}^\pi(s))$, where $\xi_{z_t}^\pi$ is a trajectory starting from $z_t$ and evolving under the dynamics of $f$ under the policy $\pi$. This can be interpreted as the worst future cost incurred when using the maximally safe policy; $V <0$ therefore means that the system is doomed to fail despite the best efforts. The value function satisfies the fixed-point Bellman equation
\begin{align}\label{vf_bellman}
    V(z_t) = \min \{l(z_t), \max_a V(z_{t+1})\}.
\end{align}
In latent space safety filtering, $l$ is learned as a classifier from a dataset of high-dimensional failure-labeled observations. The value function $V(z)$ is learned using reinforcement learning (RL) on the world model, and \eqref{vf_bellman} is replaced with the corresponding discounted contraction mapping
\begin{align}
    V(z_t) = (1-\gamma)l(z_t) + \gamma\min \{l(z_t), \max_a V(\hat{z}_{t+1})\}
\end{align}
where $\hat{z}_{t+1} \sim f(\cdot | z_{t}, a_{t})$ and $\gamma \in [0,1)$ is the discount factor \cite{fisac2019bridging}. This safety Bellman equation is used within RL as a backup target to jointly learn $V$ (critic) and a \textit{safety policy} (actor) $\pi_\text{safe}(z) = \arg\max_a V(z)$ that computes the maximally safe action at a given state. Safety filtering for a nominal policy $\pi_\text{nom}$ is then implemented as
\begin{align}\label{safety_filtering}
    a_t \sim \pi = \mathbf{1}\left[ V(\hat{z}_{t+1}) > \delta \right]\pi_\text{nom} +  \mathbf{1}\left[ V(\hat{z}_{t+1}) \leq \delta \right]\pi_\text{safe}
\end{align}
where $\delta \geq 0$ is a user-defined safety threshold.

\subsection{Adaptive Conformal Inference}
Adaptive Conformal Inference (ACI) \cite{gibbs2021adaptive} calibrates prediction sets online under distribution shift. Given an input–output pair $(x_t,y_t)$, let $s_t=S_t(x_t,y_t)$ be a finite nonconformity score, with larger values indicating poorer agreement with the prediction. Using a non-decreasing quantile function $\widehat{Q}_t$ fitted to observed scores, ACI constructs
\begin{align}
 C_t=\{y:S_t(x_t,y)\leq q_t\}, \qquad q_t=\widehat{Q}_t(1-\alpha_t).   
\end{align}
For a miscoverage level $\alpha\in(0,1)$, the adaptive parameter is initialized as $\alpha_1=\alpha$ and updated after observing $y_t$: \begin{align}\label{aci_update}
\alpha_{t+1} = \alpha_t+\lambda(\alpha-\text{err}_t), \qquad \text{err}_t = \mathbf{1}[s_t>q_t],
\end{align}
where $\lambda>0$ is a step size controlling the adaptation rate. Miscoverage decreases $\alpha_t$, inducing larger subsequent prediction sets, while coverage conversely allows for tighter sets. Larger $\lambda$ enables faster adaptation but trades off stability.

\begin{proposition}[ACI coverage~{\cite[Proposition~4.1]{gibbs2021adaptive}}]
\label{prop:aci_coverage}
Suppose $\alpha_1=\alpha\in(0,1)$ and the update
\eqref{aci_update} is applied with $\lambda>0$ without
clipping $\alpha_t$. Under the conventions
$\widehat{Q}_t(p)=-\infty$ for $p\leq0$ and
$\widehat{Q}_t(p)=+\infty$ for $p\geq1$, with probability 1,
for all $T \in \mathbb{N}$,
\begin{align}
    \left|
        \frac{1}{T}\sum_{t=1}^{T}\mathrm{err}_t-\alpha
    \right|
    \leq
    \frac{\max\{\alpha,1-\alpha\}+\lambda}{T\lambda}.
    \label{eq:aci_coverage_bound}
\end{align}
\end{proposition}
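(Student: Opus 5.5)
The argument has two parts: a deterministic bound showing that $\alpha_t$ never leaves a bounded interval, followed by a telescoping sum of the update \eqref{aci_update}. The conventions on $\widehat{Q}_t$ mean that whenever $\alpha_t$ drifts outside $[0,1]$, the next error indicator is forced in the direction that pushes it back.

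\emph{Step 1 (invariant interval).} We show by induction that $\alpha_t\in[-\lambda,1+\lambda]$ for all $t$. The base case holds because $\alpha_1=\alpha\in(0,1)$. For the inductive step, take three cases.
\begin{itemize}
\item If $\alpha_t>1$, then $1-\alpha_t<0$. By the convention $q_t=\widehat{Q}_t(1-\alpha_t)=-\infty$, so $\mathrm{err}_t=\mathbf{1}[s_t>-\infty]=1$, because $s_t$ is finite. Hence $\alpha_{t+1}=\alpha_t-\lambda(1-\alpha)<\alpha_t\le 1+\lambda$. Also $\alpha_{t+1}\ge 1-\lambda(1-\alpha)\ge-\lambda$.
\item If $\alpha_t<0$, then $1-\alpha_t>1$, so $q_t=+\infty$ and $\mathrm{err}_t=0$. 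Hence $\alpha_{t+1}=\alpha_t+\lambda\alpha$, which lies in $[-\lambda,\lambda\alpha)\subset[-\lambda,1+\lambda]$.
\item If $\alpha_t\in[0,1]$, a single step moves $\alpha_t$ by at most $\lambda\max\{\alpha,1-\alpha\}\le\lambda$. Therefore $\alpha_{t+1}\in[-\lambda,1+\lambda]$.
\end{itemize}
The boundary points $\alpha_t\in\{0,1\}$ are also covered by the conventions, since the thresholds are then $p\le0$ or $p\ge1$. This step is the main point of care: the finiteness of $s_t$ is what makes $\mathrm{err}_t$ deterministic in the extreme regimes. The "with probability 1" qualifier only reflects that the scores are almost surely finite.

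\emph{Step 2 (sharper range and telescoping).} We refine the interval to $[-\lambda\alpha',\,1+\lambda\alpha'']$ with $\max\{\alpha,1-\alpha\}$ in place of the crude $\lambda$. This follows because, by the case analysis above, the overshoot past $0$ is at most $\lambda\alpha$ and the overshoot past $1$ is at most $\lambda(1-\alpha)$. Summing the update \eqref{aci_update} over $t=1,\dots,T$ gives
\begin{align*}
\alpha_{T+1}-\alpha_1=\lambda\sum_{t=1}^T(\alpha-\mathrm{err}_t),
\end{align*}
and hence
\begin{align*}
\left|\frac1T\sum_{t=1}^T\mathrm{err}_t-\alpha\right|=\frac{|\alpha_{T+1}-\alpha_1|}{\lambda T}.
\end{align*}

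\emph{Step 3 (conclusion).} We bound $|\alpha_{T+1}-\alpha|$. Since $\alpha_{T+1}\in[-\lambda,1+\lambda]$ and $\alpha\in(0,1)$, we have $|\alpha_{T+1}-\alpha|\le\max\{\alpha+\lambda,\,1-\alpha+\lambda\}=\max\{\alpha,1-\alpha\}+\lambda$. Dividing by $\lambda T$ yields \eqref{eq:aci_coverage_bound} for every $T\in\mathbb{N}$.

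Two remarks on the argument. The coarse interval from Step 1 already suffices for the stated bound, so the refinement in Step 2 is optional. No probabilistic assumption on the data sequence is used anywhere.
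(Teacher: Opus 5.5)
Your proof is correct and is essentially the argument of Gibbs and Cand\`es that the paper cites without reproducing: their Lemma 4.1 shows, via the quantile conventions, that $\alpha_t\in[-\lambda,1+\lambda]$, and the update is then telescoped. The one slip is in the optional refinement of Step 2, where the overshoots are swapped: an $\mathrm{err}_t=1$ step moves $\alpha_t$ down by $\lambda(1-\alpha)$ and an $\mathrm{err}_t=0$ step moves it up by $\lambda\alpha$, so the sharper invariant is $[-\lambda(1-\alpha),\,1+\lambda\alpha]$. Since Step 3 uses only the coarse interval, the stated bound is unaffected.
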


Proposition~\ref{prop:aci_coverage} puts no constraints on the distribution generating the data, meaning that \eqref{eq:aci_coverage_bound} is valid even when exchangeability does not hold between subsequent time steps. In our method, latent prediction errors provide the nonconformity scores used to construct uncertainty sets for pessimistic safety evaluation.

\section{Method}
We propose an online safety-filtering framework that enables safety reasoning with fallible latent world models. The key idea is to avoid treating the world model prediction as a point estimate that can be trusted uniformly during deployment. Instead, we use the discrepancy between predicted and inferred latent states as real-time feedback about the reliability of the world model, and use this feedback to construct a pessimistic safety value that accounts for possible model errors.

At each time step, the world model predicts a latent state $\hat{z}_t$ using the learned latent dynamics, while the encoder infers a latent state $z_t$ from the current observation. We define the instantaneous latent prediction error as $e_t = ||z_t - \hat{z}_t||_2$.
This error provides a simple online signal of how well the latent dynamics are tracking the true system evolution. When the prediction error is small, the nominal value function can be trusted more closely. When the prediction error grows, the safety filter should become more cautious, since the nominal value function may be overconfident in regions where the world model is inaccurate.

\begin{algorithm}[t]
\caption{Pessimistic Latent Safety Filtering}
\label{alg:pess_safety_filering}
\begin{algorithmic}[1]
\Require Encoder $\mathcal{E}$, dynamics $f$, value function $V$, nominal policy $\pi_\text{nom}$, safety policy $\pi_\text{safe}$ trust radius $q_\text{max}$, error buffer $\mathcal{B}$, threshold $\delta$

\For{$t = 1:T$}

\State $z_t \sim \mathcal{E}(\cdot | \hat{z}_{t}, o_t)$ \Comment{Observe and encode}

\State $e_t = ||z_t - \hat{z}_t||_2$ \Comment{Compute prediction error}

\State Append $e_t$ to buffer $\mathcal{B}$

\State Estimate uncertainty set $\mathcal{U}_{t+1}(\mathcal{B})$

\State $\hat{z}_{t+1} \sim f(\cdot | z_t, \pi_\text{nom}(z_t))$ \Comment{Estimate next state} 

\State Compute pessimistic value function
\begin{align}\label{opt_prob}
    \bar{V}(\hat{z}_{t+1}) = &\min_z V(z) \\ \notag 
    &\text{ s.t} \hspace{0.5 cm} z \in \mathcal{U}_{t+1} \notag
 \end{align}

\State Filter the nominal action using $\bar{V}$ according to \eqref{safety_filtering}.

\EndFor

\end{algorithmic}
\end{algorithm}

\subsection{Online Safety Adaptation via Pessimistic Imaginations}
We introduce \textit{pessimistic imaginations} for latent safety filtering, outlined in Algorithm \ref{alg:pess_safety_filering}. 
At each time step, we compute the latent prediction error $e_t$ and add it to a buffer of previously observed errors. This buffer is used to construct an uncertainty set $\mathcal{U}_{t+1}$ around the next latent prediction $\hat{z}_{t+1}$. 
The uncertainty set represents latent states that are plausible given the recent mismatch between the world model and the observed system behavior. Rather than evaluating safety only at the predicted latent state $\hat{z}_{t+1}$, we evaluate safety pessimistically over $\mathcal{U}_{t+1}$.
In particular, we construct an adapted safety value
$\bar{V}$ by solving the corresponding constrained optimization problem in \eqref{opt_prob} using projected gradient descent (PGD). This procedure has an intuitive interpretation: The resulting value $\bar{V}$ reflects the worst-case safety outcome among latent states that are consistent with the current estimate of model error. We then use $\bar{V}$ in place of the nominal value $V(\hat{z}_{t+1})$ for filtering the proposed action. As a result, the filter remains minimally conservative when the world model is accurate, but becomes more cautious when online observations reveal that the model is unreliable.

\subsection{Adaptive Conformal Inference for Online Uncertainty Estimation}\label{aci_estimation}
We use ACI \cite{gibbs2021adaptive} to construct the uncertainty set $\mathcal{U}_{t+1}$. While Algorithm \ref{alg:pess_safety_filering} is not tied to a particular uncertainty-estimation method, ACI provides a simple distribution-free mechanism for adapting uncertainty sets under non-stationary deployment conditions. This is especially useful in the context of learned world models, whose prediction errors may vary across tasks, environments, and phases of execution. We define the nonconformity score at time $t$ as the latent prediction error
\begin{align}
    s_t = e_t = ||z_t - \hat{z}_t||_2.
\end{align}
Given a target miscoverage level $\alpha$, the adaptive parameter is updated according to \eqref{aci_update}, with $\text{err}_t = \mathbf{1}\left[||z_t - \hat{z}_t||_2 > q_t\right]$.
We then compute an empirical quantile $q_{t+1}$ of the observed nonconformity scores and define the next uncertainty set as
\begin{align}
    \mathcal{U}_{t+1} = \{z : ||z - \hat{z}_{t+1}||_2 \leq \min(q_{t+1}, q_\text{max})\},
\end{align}
where $q_{t+1}$ is the fitted $\frac{\lceil (t + 1)(1 - \alpha_{t+1})\rceil}{t}$ quantile of the nonconformity scores, and $q_\text{max}$ defines a trust-region radius that limits the pessimistic optimization to a calibrated neighborhood of the predicted latent state.

The trust-region radius $q_\text{max}$ controls the tradeoff  between safety and conservatism. A larger radius allows the filter to guard against a broader range of model errors, but may introduce excessive conservatism by considering latent states that are not physically meaningful.  
A smaller radius keeps the pessimistic imagination closer to the learned latent manifold, but may fail to account for safety-relevant prediction errors. 
To choose $q_\text{max}$ systematically, we leverage trajectory-level conformal prediction \cite{seo2025uncertainty, ren2023robots}. 
We construct a calibration dataset $\mathcal{D}_\text{cal}=  \{\tau_i\}_{i=1}^{N_\text{cal}}$ with trajectories $\tau_i = \{o_t^i,a_t^i\}_{t=1}^{T}$. 
For each trajectory $\tau_i$, we compute the sequence of latent prediction errors ${e_1^i,\ldots,e_{T}^i}$ and define the \textit{trajectory-level} nonconformity score $Q_{\tau_i}^{\alpha_{\mathrm{traj}}}$ as the $(1-\alpha_{\mathrm{traj}})$ quantile of these errors. 
The trust region radius $q_\text{max}$ is then chosen as the $1-\alpha_\text{cal}$ quantile of the computed trajectory-level scores $\{Q_{\tau_i}^{\alpha_\text{traj}}\}_{i=1}^{N_\text{cal}}$.
Under exchangeability between a new trajectory $\tau_{\mathrm{new}}$ and the calibration set $\mathcal{D}_{\mathrm{cal}}$, conformal calibration gives
\begin{align}\label{cp_bound}
    \mathbb{P}(Q_{\tau_\text{new}}^{\alpha_\text{traj}} \leq q_\text{max}) \geq 1-\alpha_\text{cal}.
\end{align}
This guarantee does not imply that every latent state considered by the pessimistic optimization in \eqref{opt_prob} is physically realizable. Rather, it provides a principled way to choose the maximum adaptation radius so that the filter can trade off sensitivity to world-model error against unnecessary conservatism within the calibrated operating distribution.

When little online data has been observed, we initialize $q_1 = \infty$, which makes the uncertainty radius equal to the trust-region radius $q_{\max}$. Thus, the filter begins cautiously by searching over the full calibrated trust region, and then becomes more adaptive as deployment data accumulates.

\subsection{Coverage Guarantee}
The following result combines trajectory-level calibration of $q_{\max}$ with the online ACI update used to construct $\mathcal{U}_{t+1}$.
\begin{theorem}\label{thm_1}
Let $q_\text{max}$ be chosen using the procedure in Section \ref{aci_estimation}, and $\mathcal{U}_{t}$ be computed using Adaptive Conformal Inference with radius $\bar{q}_t = \min(q_t, q_\text{max})$. Let $r_T = \frac{\max(\alpha, 1-\alpha) + \lambda}{T\lambda}$ and $\mathbf{1}$ be the indicator function. Given a new trajectory $\tau$, assuming exchangeability with $\mathcal{D}_\text{cal}$,
\begin{align}\label{thm1_ineq}
    \mathbb{P}\left( \frac{1}{T}\sum_{t=1}^T \mathbf{1}[e_t > \bar{q}_t] \leq \alpha + \alpha_\text{traj} + r_T\right) \geq 1 - \alpha_\text{cal}.
\end{align}
\end{theorem}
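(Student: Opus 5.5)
The plan is to split each miscoverage event of the clipped radius into a piece controlled by ACI and a piece controlled by the trust-region calibration. Since $\bar{q}_t = \min(q_t, q_\text{max})$, the event $e_t > \bar{q}_t$ occurs only if $e_t > q_t$ or $e_t > q_\text{max}$. This gives, for every $t$,
\begin{align*}
\mathbf{1}[e_t > \bar{q}_t] \leq \mathbf{1}[e_t > q_t] + \mathbf{1}[e_t > q_\text{max}] = \mathrm{err}_t + \mathbf{1}[e_t > q_\text{max}].
\end{align*}
Averaging over $t = 1,\dots,T$ then bounds the empirical miscoverage of $\bar{q}_t$ by the sum of two averages. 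I will handle these separately and combine them at the end.

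\textbf{ACI term.} Here I invoke Proposition~\ref{prop:aci_coverage}. One point needs care: the ACI recursion \eqref{aci_update} must be driven by $\mathrm{err}_t = \mathbf{1}[e_t > q_t]$ with the unclipped $q_t$, exactly as specified in Section~\ref{aci_estimation}. The clipping by $q_\text{max}$ only affects the set used for the pessimistic optimization, not the $\alpha_t$ updates. Then Proposition~\ref{prop:aci_coverage}, using the convention $\widehat{Q}_t(p) = +\infty$ for $p \geq 1$ (consistent with the initialization $q_1 = \infty$), gives with probability one and for every $T$
\begin{align*}
\frac{1}{T}\sum_{t=1}^T \mathrm{err}_t \leq \alpha + r_T.
\end{align*}
Because this bound holds almost surely and is deterministic in $T$, it does not reduce the probability of the final event.

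\textbf{Trust-region term.} By construction, $Q_\tau^{\alpha_\text{traj}}$ is the empirical $(1-\alpha_\text{traj})$ quantile of $\{e_1,\dots,e_T\}$ along the new trajectory $\tau$. So at most a fraction $\alpha_\text{traj}$ of the errors strictly exceed it:
\begin{align*}
\frac{1}{T}\sum_{t=1}^T \mathbf{1}[e_t > Q_\tau^{\alpha_\text{traj}}] \leq \alpha_\text{traj}.
\end{align*}
On the event $\{Q_\tau^{\alpha_\text{traj}} \leq q_\text{max}\}$ we have $\mathbf{1}[e_t > q_\text{max}] \leq \mathbf{1}[e_t > Q_\tau^{\alpha_\text{traj}}]$, so the average of the second indicators is at most $\alpha_\text{traj}$. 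By the conformal guarantee \eqref{cp_bound}, which relies on exchangeability of $\tau$ with $\mathcal{D}_\text{cal}$, this event has probability at least $1-\alpha_\text{cal}$.

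Combining the two bounds: on that event, the averaged miscoverage is at most $\alpha + \alpha_\text{traj} + r_T$, which is exactly \eqref{thm1_ineq}. I expect the main obstacle to be the bookkeeping details rather than any deep step. First, the empirical quantile convention must be chosen (e.g., $Q^{\alpha_\text{traj}}_\tau$ as the $\lceil T(1-\alpha_\text{traj})\rceil$-th order statistic) so that the fraction strictly above it is genuinely at most $\alpha_\text{traj}$. Second, I must confirm that the errors $e_t$ on $\tau$ are generated by the same online pipeline as in calibration, so that exchangeability applies to the scores $Q_\tau$. I would state both explicitly as part of the procedure in Section~\ref{aci_estimation}.
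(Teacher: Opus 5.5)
Your proposal is correct and follows the paper's proof essentially step for step: the same split $\mathbf{1}[e_t > \bar{q}_t] \le \mathbf{1}[e_t > q_t] + \mathbf{1}[e_t > q_{\text{max}}]$, the almost-sure ACI bound of Proposition~\ref{prop:aci_coverage} for the first average, and the conformal event $\{Q_\tau^{\alpha_{\text{traj}}} \le q_{\text{max}}\}$ from \eqref{cp_bound} for the second. The only differences are minor refinements: you use just the inclusion of that event into $\{\frac{1}{T}\sum_{t} \mathbf{1}[e_t > q_{\text{max}}] \le \alpha_{\text{traj}}\}$ under an explicit order-statistic convention, where the paper asserts an equality of events without fixing one, and you state explicitly that the ACI recursion must be driven by the unclipped $q_t$.
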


The proof for Theorem \ref{thm_1} is provided in the appendix. Intuitively, this theorem states that, with high probability over calibrated trajectories, the fraction of time steps at which the realized latent prediction error exceeds the adaptive uncertainty radius is controlled by the target online miscoverage level, the trajectory-level calibration tolerance, and a finite-time adaptation term. In our setting, this bound provides a statistical basis for using $\mathcal{U}_{t+1}$ as the set of plausible latent deviations over which the safety value is pessimistically evaluated.

\section{Case Studies}
We evaluate our method in two simulated case studies. First, we use Dubins car to isolate the properties and benefits of error-feedback-based safety adaptation. We then perform a quantitative study on a manipulation task using a Franka Panda arm. 
Together, these case studies evaluate whether latent safety filters can remain effective when the world model used for safety reasoning is fallible.

\begin{figure}[t]
    \centering
    \includegraphics[width=\linewidth]{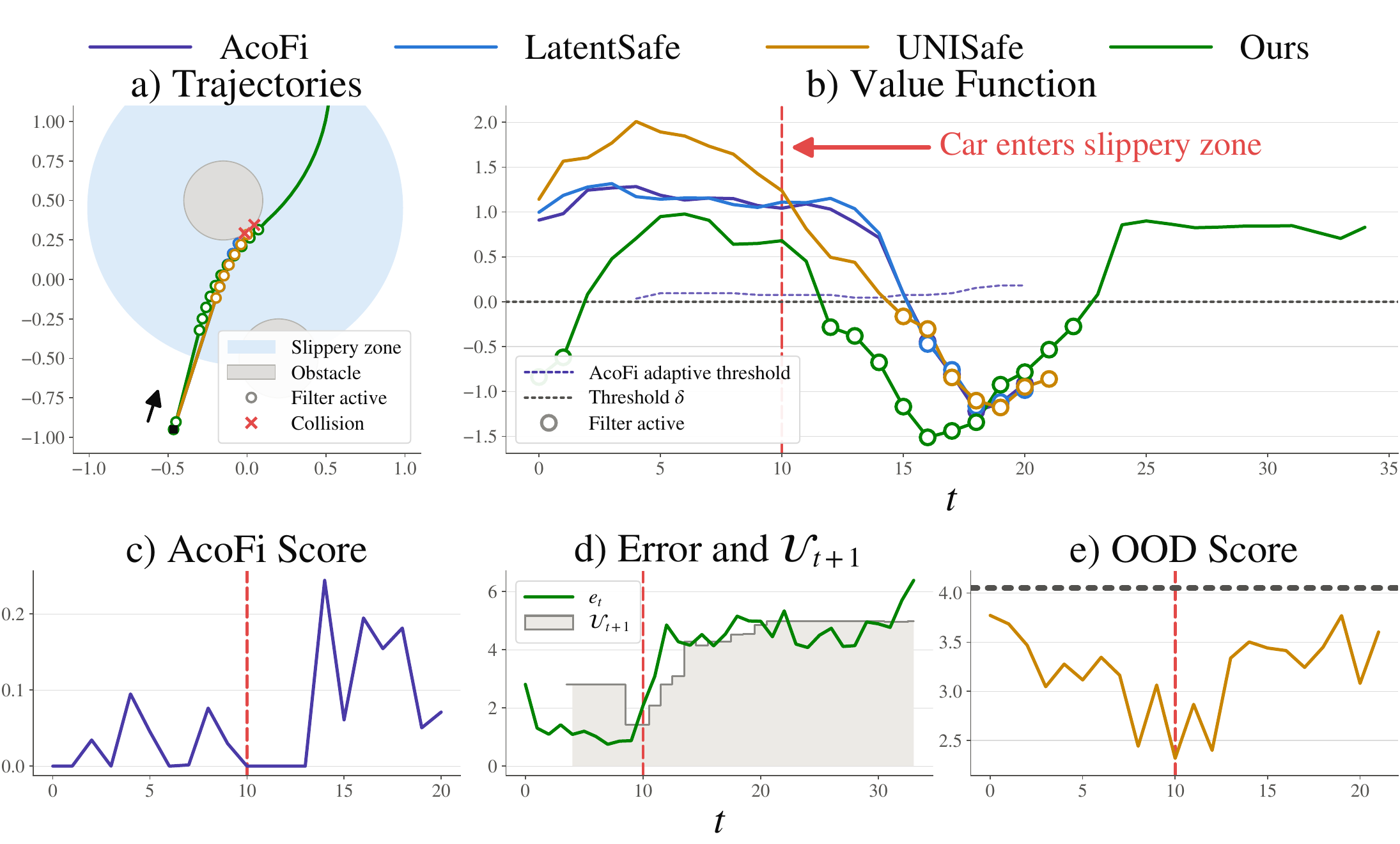}
    \caption{Case Study: Prediction errors allow our method to engage safety fallback earlier to avoid collision. Maximum pessimism is held at the start, when data is scarce.}
    \label{fig:dubins_study}
\end{figure}

\textbf{Baselines.} We compare our method to four baselines: 1) an unfiltered nominal policy, 2) LatentSafe \cite{nakamura2025generalizing}, 3) UNISafe \cite{seo2025uncertainty}, and 4) AcoFi \cite{huriot2026safe}. For all baselines, we train a DreamerV3 \cite{hafner2023mastering} world model for each environment, trained on a dataset of success and failure trajectories. The safety margin function is trained on the same dataset using the method in \cite{nakamura2025train}. The safety filters are then trained using RL within the imagination of the world model, as described in Sec. \ref{sec:latent_safety_background}. 
We aim to demonstrate how our approach addresses blind spots in current methods that can otherwise lead to safety violations when the world model is incorrect.

\textbf{Metrics.} We evaluate methods using three metrics: 1) \textit{Success}: the number of episodes in which the intended task is achieved without safety violations; 2) \textit{Failure}: the number of episodes with safety violations; 3) \textit{Incompletion}: the number of episodes that neither fail nor succeed.

\subsection{Case Study 1: Slippery Dubins Car} 
We evaluate error-feedback-based safety adaptation using a Dubins car with a “slippery” region that increases the car's velocity by a factor of 1.5 relative to the training dynamics. We train a world model on a dataset of 1500 nominal trajectories collected without the slippery zone, each composed of failure-labeled observation-action pairs $\{o_t, a_t, y_t\}_{t=1}^T$, where $o_t$ is an image snapshot of the environment at time $t$, $a_t$ is the corresponding action taken, and $y_t$ is a failure label assigned a value of $-1$ when the car collides with an obstacle and $1$ otherwise. The world model uses the DreamerV3 recurrent state-space model (RSSM) architecture with a CNN image encoder, while $l$ and $V$ are parameterized as feed-forward MLPs.

The dynamics shift causes the car to travel farther per time-step than the world model anticipates. Figure \ref{fig:dubins_study}(a–b) shows that the baselines collide because their safety estimates fail to respond sufficiently to this shift. In contrast, our latent prediction error rises upon entering the slippery region (Figure \ref{fig:dubins_study}(d)), expanding the uncertainty set and lowering the pessimistic safety value. This triggers an early intervention that steers the car away from the obstacle.
\begin{figure}[t]
    \centering
    \includegraphics[width=\linewidth]{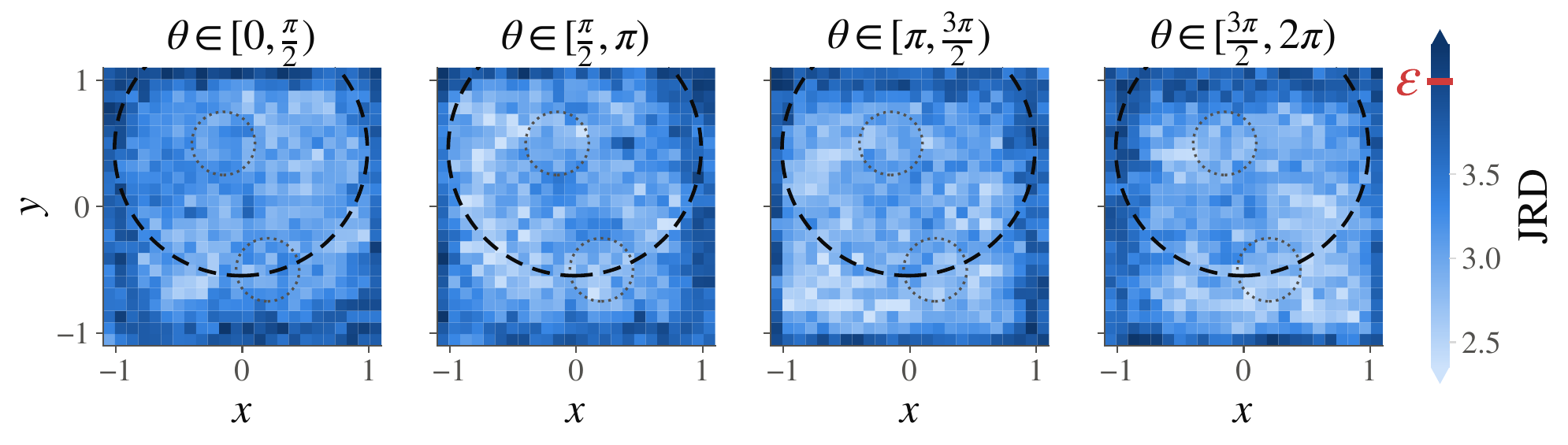}
    \caption{Heatmap of JRD score across different positions and angles for Dubins car. The black and gray circles trace out the boundaries for the slippery zone and obstacles respectively. The score remains low across the state space despite the presence of dynamics mismatch.}
    \label{fig:ood_heatmap}
\end{figure}

The bottom row shows the quantities used by different baselines for safety adaptation. AcoFi adapts the safety threshold online by computing a conformal score $s_t^{\text{AcoFi}} = \max(Q(z_t, a_t) - R_t, 0)$ with $Q$ being a state-action critic relating to the value function as $V(z_t) = \max_a Q(z_t, a)$, and $R_t = (1-\gamma)l(z_t) + \gamma\min(l(z_t), V(z_{t+1}))$ being a bootstrapped value function. A  quantile $q_t$ is then computed to adapt the safety threshold as $\delta_{t+1} \leftarrow q_{t+1} + \gamma\delta + (1-\gamma)l(z_{t+1})$. 
The bottom-left plot of Figure \ref{fig:dubins_study} displays $s_t^{\text{AcoFi}}$ for the rollout. 
Notably, the score fails to reflect the dynamics shift introduced when the car enters the slippery zone at time $t=10$. Instead, the score remains zero for several time steps, revealing that $Q(z_t, a_t) \leq R_t$, i.e the bootstrapped value estimate believes the trajectory is becoming \textit{safer} despite the increased velocity, resulting in no adaptation of the safety threshold (Figure \ref{fig:dubins_study}(b)) and the subsequent collision. 
This occurs because a critic trained on rollouts from an incorrect world model can itself become incorrect. The score $s_t^{\text{AcoFi}}$ therefore measures self-consistency between $Q$ and its bootstrapped target $R_t$, which need not reveal accumulated error in the world model.

UNISafe uses the JRD score of a probabilistic ensemble to detect OOD latent states, and trains the margin $l(z_t)$ to flag states above a calibrated threshold as unsafe. This score is shown in Figure \ref{fig:dubins_study}(e). We observe that despite the abrupt increase in velocity at $t=10$, the score does not undergo any significant change, remaining below the calibrated threshold until the moment of collision. In this case, the ensemble is \textit{confidently wrong}, producing similar predictions despite the shift in the underlying system dynamics. This is further demonstrated in Figure \ref{fig:ood_heatmap}, where we show the JRD score heatmap slices across different heading angles.
The score remains low and below the OOD threshold despite the dynamics shift induced by the slippery region. Importantly, there is no noticeable difference between the scores inside and outside the slippery region, indicating that OOD uncertainty does not always capture inaccuracies in the world model.
\begin{figure}[t]
    \centering
    \includegraphics[width=\linewidth]{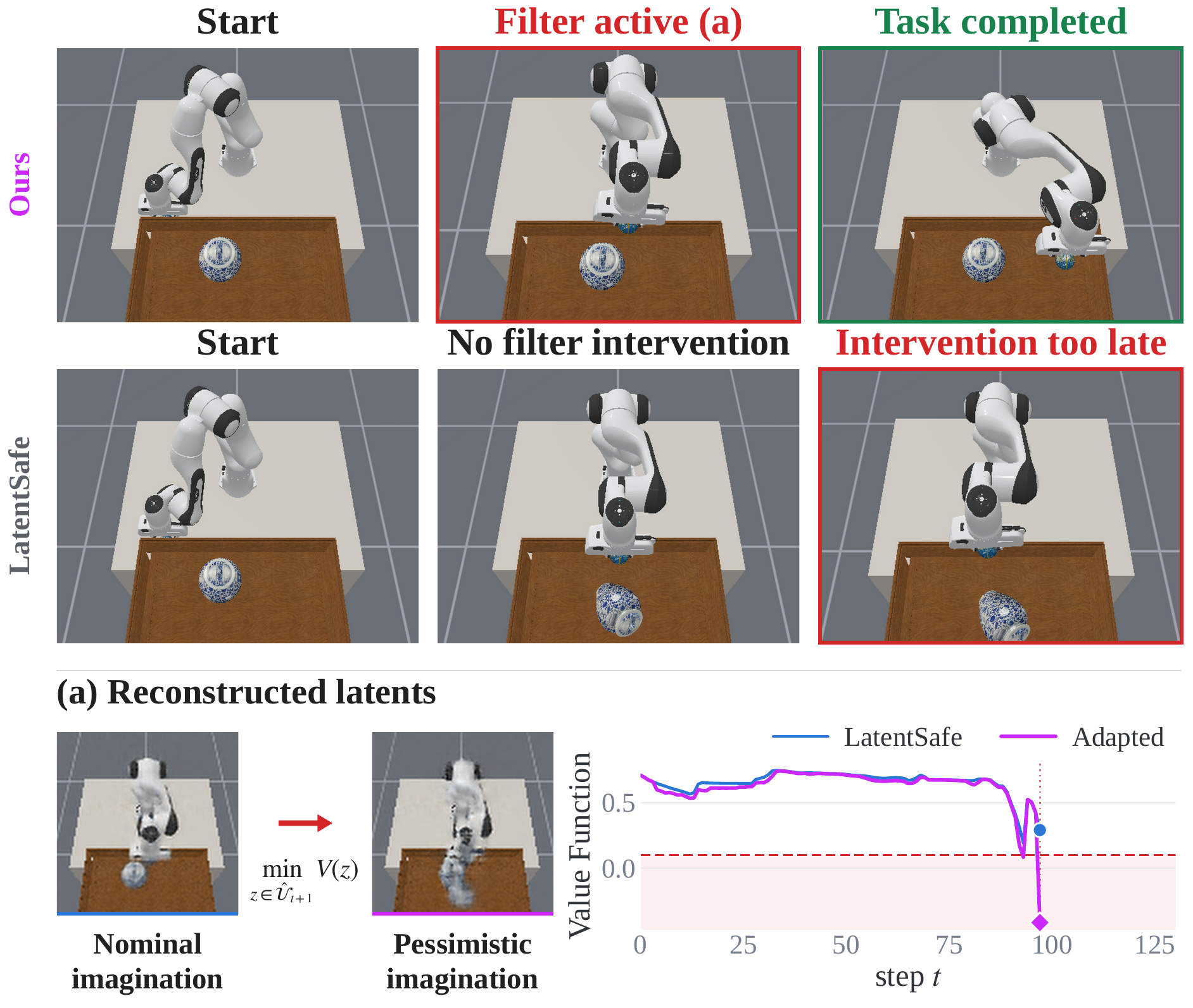}
    \caption{Our method imagines the worst-case outcome given the estimated uncertainty set, allowing it to adapt to errors in the world model.}
    \label{fig:franka_example}
\end{figure}
Our approach instead leverages the online latent prediction error $e_t$, shown in Figure \ref{fig:dubins_study}(d). Notably, this error rises sharply when the car enters the slippery zone, as the world model predictions no longer align with the observations. Algorithm \ref{alg:pess_safety_filering} expands the uncertainty set in response to this error, allowing the value function to quickly adapt (Figure \ref{fig:dubins_study}(b)). This triggers an early safety intervention which steers the car away from the obstacle.

This example highlights two benefits of adapting the value function with latent prediction error. First, the error is an observation-grounded signal of world-model inaccuracy, giving the filter a direct reactive mechanism at deployment. Second, UNISafe and AcoFi instead rely on auxiliary quantities trained separately from the deployed world model. If those are themselves miscalibrated, an untrustworthy model still looks trustworthy.

\subsection{Case Study 2: Franka Panda Arm}\label{franka_case_study}
We next study our method in a simulated manipulation task using a Franka Panda arm. The task is to pick up a globe and place it on a shelf without toppling a vase along the way. We implement this environment in ManiSkill \cite{taomaniskill3}, and train a world model and the safety margin function using a dataset of 3000 scripted trajectories, consisting of 1286 failed and 1714 safe examples. Each trajectory is initialized by randomizing the starting positions of the globe and vase. The observations consist of wrist-mounted and top-down camera views of the robot, along with proprioceptive inputs. Actions are 6-dimensional delta increments for position and orientation concatenated with a discrete gripper command.
The nominal policy is a diffusion policy \cite{chi2025diffusion} trained using a dataset of 600 simulated trajectories.

\begin{figure}[t!]
    \centering
    \includegraphics[width=\linewidth]{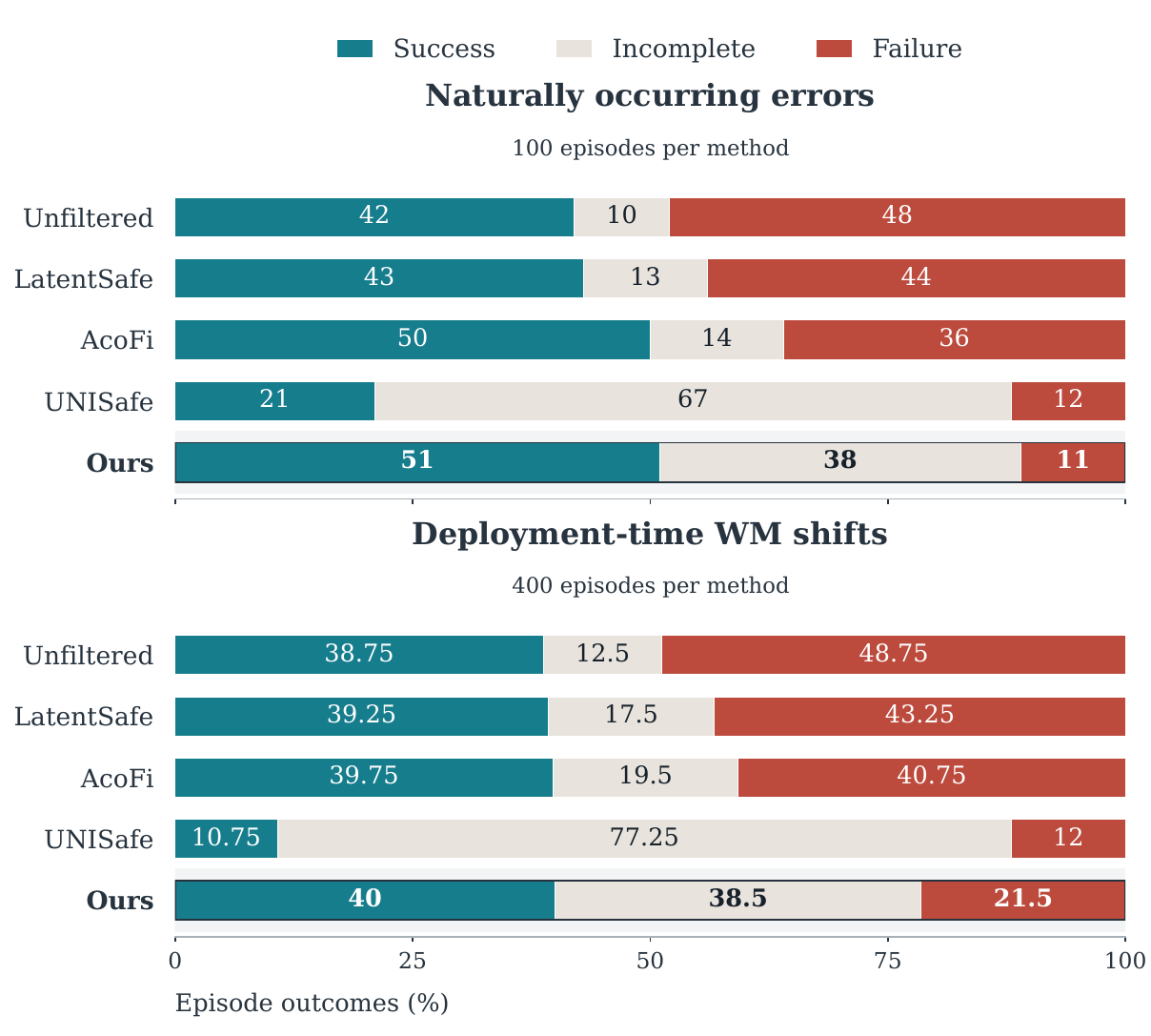}
    \caption{Franka Panda simulation results with a nominal diffusion policy. Our approach drastically reduces failures without suffering from overconservatism.}
    \label{fig:sim_results}
\end{figure}

\textbf{Qualitative Evaluation.} We begin by illustrating our method qualitatively through Figure \ref{fig:franka_example}, contrasted against LatentSafe, which does not have a mechanism for online adaptation. Due to natural errors in the world model, the nominal value function $V$ does not accurately capture the boundary of the unsafe set. As a result, LatentSafe triggers the safety intervention only after the vase has tilted beyond the point of no return. By continuously adapting the value function using online prediction error, our method remains robust to world model inaccuracies. This enables preemptive engagement of the safety fallback, allowing the arm to place the globe on the shelf without toppling the vase. 

By training a reconstructor alongside the world model, we can visualize both the nominal latent prediction and the worst-case latent state found by solving \eqref{opt_prob} at the time of filter intervention. These reconstructions are shown at the bottom of Figure~\ref{fig:franka_example}. In this example, Algorithm \ref{alg:pess_safety_filering} imagines that, under the current uncertainty estimate, the proposed action could lead to the vase toppling. This explains the discrepancy between the nominal and adapted value functions in the bottom-right  of Figure \ref{fig:franka_example}: our method assigns the pessimistic $\bar{V}$ to be much lower than the nominal $V$.

\textbf{Quantitative Evaluation.} Next, we quantitatively evaluate the performance of our method.
We use $q_\text{max}=2.0$, computed with $\alpha_\text{cal} = 0.05$ and $\alpha_\text{traj} = 0.1$ on a calibration set of size 100. For ACI, we choose $\alpha=0.2$ with step size $\lambda=0.05$. A threshold $\delta=0.15$ is used for safety filtering across all baselines, tuned to balance safety and performance. 
We run episodes under two types of world-model error:
\begin{itemize}
    \item \textit{Naturally occurring prediction errors:} 
    These are errors that arise from imperfect world-model learning under the nominal deployment distribution. This setting evaluates if the safety filter can remain effective when the world model is imperfect but not explicitly perturbed.
    \item \textit{Deployment-time world-model shifts:} We induce additional mismatch between the world model and deployment observations through two probes. First, we introduce a \textit{temporal abstraction mismatch} by changing the effective transition scale of the world model from its nominal $\Delta t=0.1$s to $\Delta t=0.2$s or $\Delta t=0.05$s. Second, we introduce \textit{observation staleness} by conditioning the world model on delayed observations $o_{t-L}$ in \eqref{world_model}, with $L \in \{0,2,4\}$. These probes alter the consistency between the model's latent predictions and the observations received during deployment, producing systematic latent prediction errors.
\end{itemize}

The results are given in Figure \ref{fig:sim_results}. Notably, our method achieves a comparable success rate while suffering the fewest failures in the in-distribution setting with natural world model errors. After introducing deployment-time world-model shifts, our approach still retains the highest success rate, while being significantly safer than LatentSafe and AcoFi. The rise in incompletion rates under our method reflects a division of labor, where the filter must intervene early enough for $\pi_\text{safe}$ to retract the system to a recoverable state, but whether the episode then completes depends on $\pi_\text{nom}$ resuming post-intervention, which it was not trained to do. The outcomes are also not equally costly, since failures are irreversible while incompletions leave the system intact and allow for continued goal-directed behavior.

\begin{figure}[!t]
    \centering
    \includegraphics[width=\linewidth]{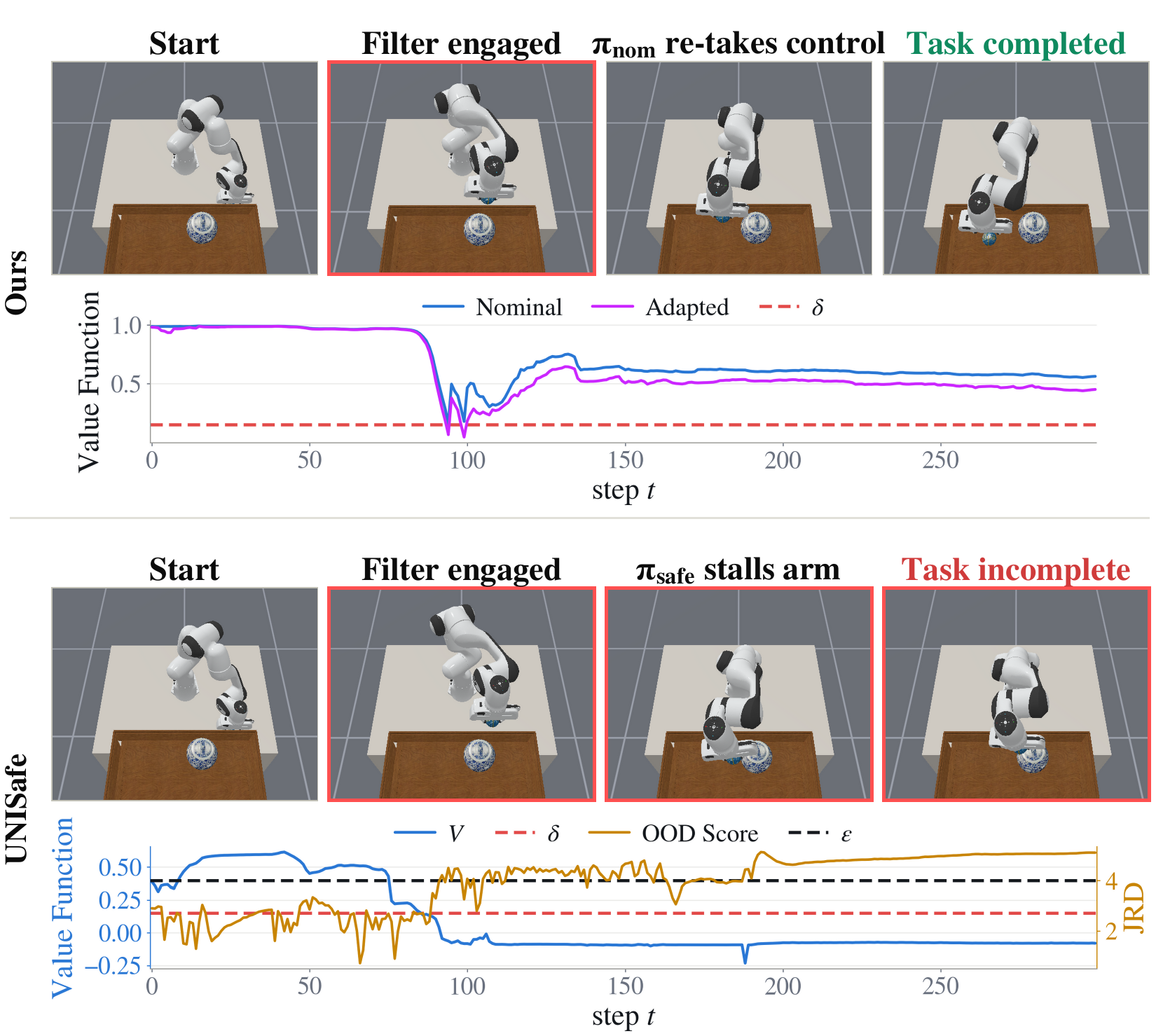}
    \caption{(Top) Our method adapts the value function using estimated model errors to avoid failure while retaining task performance. (Bottom) UNISafe enables safety at the cost of frequently driving the system into OOD states where it is unable to recover.}
    \label{fig:unisafe_compare}
\end{figure}

\begin{figure*}[!ht]
    \centering
    \includegraphics[width=0.85\linewidth]{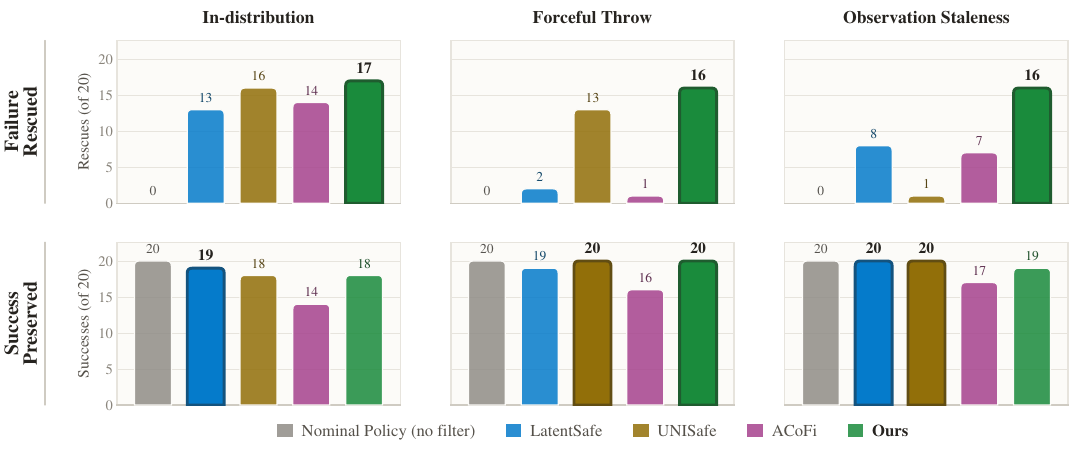}
    \caption{Hardware evaluations on Franka Panda across three different testing conditions. Top row shows the number of failed nominal trajectories rescued (higher is better). Bottom row shows the number of successful nominal trajectories still succeeding under each method (higher is better).}
    \label{fig:hardware_results}
\end{figure*}
\begin{figure}[!ht]
    \centering
    \includegraphics[width=0.9\linewidth]{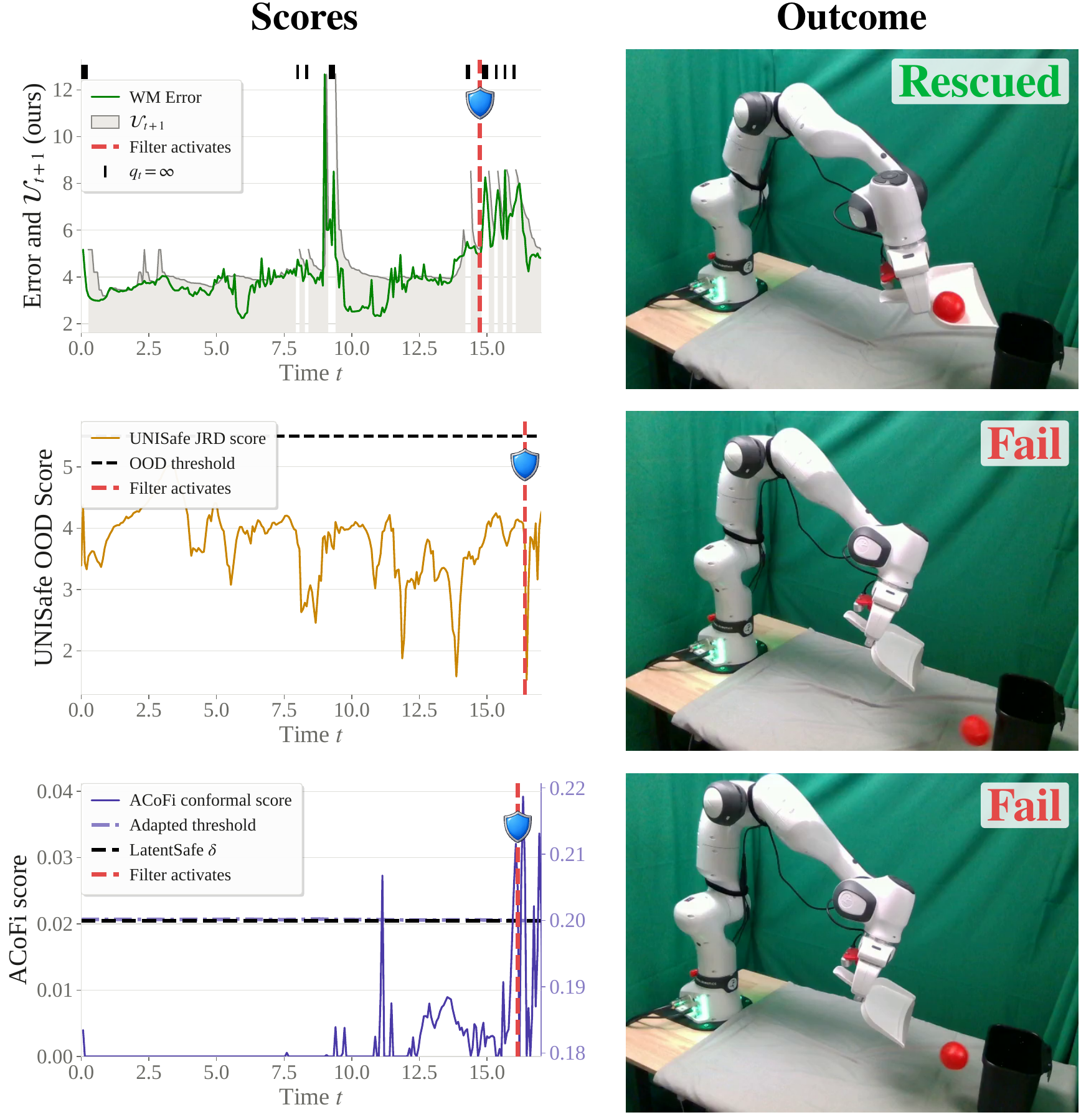}
    \caption{By adapting the safety filter using the estimated world model errors, our method is able to rescue a failed trajectory by engaging the safety policy earlier than baselines. \vspace{-0.5cm}}
    \label{fig:hardware_example}
\end{figure}

UNISafe attains the lowest failure rate during deployment time WM shifts, but does so at the cost of substantial conservatism resulting in frequent incompletions. 
This is visualized in Figure \ref{fig:unisafe_compare} (bottom).
UNISafe frequently classifies the latent state as OOD (yellow line repeatedly crossing the black threshold). We hypothesize that this occurs because the manipulation setting is substantially more complex than the Dubins car example, with higher-dimensional observations, a larger action space, and more diverse contact-rich transitions, making ensemble disagreement more likely during deployment. Since UNISafe purposefully treats OOD states as unsafe, this results in a highly conservative value-function landscape that repeatedly triggers the safety policy, ultimately causing the robot to stall.
 
In contrast, our method does not actively avoid OOD regions, but rather leverages observed evidence to estimate how wrong the world model is in those areas. This facilitates recovery by enabling the safety policy to retract the arm away from the vase, allowing the value function to rise and hand control back to the nominal policy (top row, Figure \ref{fig:unisafe_compare}). 
%

\section{Hardware Experiments}
We evaluate our method on hardware using a Franka Panda platform operating at 15 Hz, where the task is to use a dustpan to throw a tomato into a trash can. Failure is defined as the tomato landing outside the trash can. We evaluate our method by deploying it on both successful and failed trajectory replays. Each type of replay is divided into three categories, \textit{\textbf{1) In-distribution:}} the trajectory comes from the same distribution as the WM training dataset, \textit{\textbf{2) Forceful Throw:}} the tomato is suddenly jerked out of the dustpan, \textit{\textbf{3) Observation Staleness:}} The world model is conditioned on delayed observations $o_{t-L}$ in the same manner as in section \ref{franka_case_study}. We use $L=10$ for all observation staleness experiments. A collection of 20 replays is collected for both success and failure in each individual category.

We train a Dreamer world model and safety margin function using a dataset of 350 successful and 150 failed trajectories, collected using teleoperation. The observations consist of side and top-down views of the robot along with a wrist view and proprioception. We use $\delta=0.2$ across all experiments and $q_\text{max}=4.5$, derived from a calibration dataset of 30 trajectories with $\alpha_\text{cal} = \alpha_\text{traj} = 0.25$. ACI parameters are chosen as $\alpha=0.2$ and $\lambda=0.05$. Our method with PGD averages $39.59$ ms per step on an NVIDIA GeForce RTX 5090  (LatentSafe: $18.09$ ms, AcoFi: $17.17$ ms, UNISafe: $17.42$ ms), well within the 15 Hz control budget.
%

\textbf{Metrics.} Evaluation is based on the number of failed trajectories rescued, and the number of successful outcomes preserved.

Results from 480 runs are shown in Figure \ref{fig:hardware_results}. First, as expected, when the deployment setting remains within the world model's training distribution, all methods lead to a significant number of rescues among trajectories that would otherwise fail.
When the tomato is forcefully thrown, LatentSafe and ACoFi are largely unable to prevent failures, reflecting their limited ability to adapt to the sudden dynamics mismatch. In contrast, our method maintains a high rescue rate by adapting the value function pessimistically in response to the observed world-model errors, enabling preemptive engagement of the safety fallback before the tomato is thrown out of the pan. 
UNISafe likewise maintains a high number of rescues, as the forceful throw is detected as an OOD action that should be safeguarded against.
Under observation staleness, our method continues to enable a significant number of rescues, while performance degrades substantially across all baselines. The strongest baseline achieves $50\%$ fewer rescues than our method. 
We hypothesize that this is because stale observations induce incorrect latent predictions while still remaining within the world model's learned distribution, causing UNISafe to under-respond to the resulting model error.
Moreover, because these experiments use open-loop replays, UNISafe does not experience the persistent covariate shift or robot stalling observed in the closed-loop simulation study in Section~\ref{franka_case_study}.
 
This behavior is also evident in the qualitative comparison in Figure \ref{fig:hardware_example}, which shows one rollout from the observation-staleness setting evaluated under different methods.
The UNISafe JRD score remains below its OOD threshold and does not exhibit a strong response to the prediction mismatch, indicating agreement among the probabilistic ensemble despite elevated model error.
Similarly, the ACoFi nonconformity score remains low in magnitude throughout the episode and does not respond noticeably to model inaccuracies.
 
In contrast, our method directly tracks the discrepancy between the predicted and inferred latent states. Under stale observations, this error remains elevated throughout the episode and rises sharply near the end as the dustpan tilts downward, producing action inputs that are increasingly inconsistent with the stale observations (the mid-episode spike is caused by the chaotic bouncing dynamics of the tomato as the arm grasps the dustpan). This error-aware adaptation allows our method to lower the pessimistic value estimate early enough for the safety policy to prevent failure.

\section{Conclusions and Limitations}
We present a method for online adaptation of world-model based safety filters from prediction error estimates. By continuously estimating latent uncertainty sets during operation, we compute the worst-case imagination under the current estimated uncertainty. Through both qualitative and quantitative demonstrations across simulation and hardware, we show that our error-aware safety filter results in improved safety over existing methods.

Our work is not without limitations. While providing additional robustness against world model errors, our method is reactive and does not provide formal safety guarantees. Solving \eqref{opt_prob} incurs additional computational cost compared with existing methods. Moreover, Theorem \ref{thm_1} assumes exchangeability between deployment and calibration trajectories, which can be violated under safety filtering if recovery actions are absent from the calibration set. These constitute important avenues for future research.


\bibliographystyle{IEEEtran}
\bibliography{bib}

@article{seo2025uncertainty,
  title={Uncertainty-aware latent safety filters for avoiding out-of-distribution failures},
  author={Seo, Junwon and Nakamura, Kensuke and Bajcsy, Andrea},
  journal={arXiv preprint arXiv:2505.00779},
  year={2025}
}

@article{huriot2026safe,
  title={Safe Control using Learned Safety Filters and Adaptive Conformal Inference},
  author={Huriot, Sacha and Tabbara, Ihab and Sibai, Hussein},
  journal={arXiv preprint arXiv:2604.18482},
  year={2026}
}

@article{ren2023robots,
  title={Robots that ask for help: Uncertainty alignment for large language model planners},
  author={Ren, Allen Z and Dixit, Anushri and Bodrova, Alexandra and Singh, Sumeet and Tu, Stephen and Brown, Noah and Xu, Peng and Takayama, Leila and Xia, Fei and Varley, Jake and others},
  journal={arXiv preprint arXiv:2307.01928},
  year={2023}
}

@article{gibbs2021adaptive,
  title={Adaptive conformal inference under distribution shift},
  author={Gibbs, Isaac and Candes, Emmanuel},
  journal={Advances in Neural Information Processing Systems},
  volume={34},
  pages={1660--1672},
  year={2021}
}

@article{nakamura2025train,
  title={How to train your latent control barrier function: Smooth safety filtering under hard-to-model constraints},
  author={Nakamura, Kensuke and Bishop, Arun L and Man, Steven and Johnson, Aaron M and Manchester, Zachary and Bajcsy, Andrea},
  journal={arXiv preprint arXiv:2511.18606},
  year={2025}
}

@article{taomaniskill3,
  title={ManiSkill3: GPU Parallelized Robotics Simulation and Rendering for Generalizable Embodied AI},
  author={Stone Tao and Fanbo Xiang and Arth Shukla and Yuzhe Qin and Xander Hinrichsen and Xiaodi Yuan and Chen Bao and Xinsong Lin and Yulin Liu and Tse-kai Chan and Yuan Gao and Xuanlin Li and Tongzhou Mu and Nan Xiao and Arnav Gurha and Viswesh Nagaswamy Rajesh and Yong Woo Choi and Yen-Ru Chen and Zhiao Huang and Roberto Calandra and Rui Chen and Shan Luo and Hao Su},
  journal = {Robotics: Science and Systems},
  year={2025},
}

@article{chi2025diffusion,
  title={Diffusion policy: Visuomotor policy learning via action diffusion},
  author={Chi, Cheng and Xu, Zhenjia and Feng, Siyuan and Cousineau, Eric and Du, Yilun and Burchfiel, Benjamin and Tedrake, Russ and Song, Shuran},
  journal={The International Journal of Robotics Research},
  volume={44},
  number={10-11},
  pages={1684--1704},
  year={2025},
  publisher={Sage Publications Sage UK: London, England}
}

@inproceedings{kim2025cosmospolicy,
  title={Cosmos Policy: Fine-Tuning Video Models for Visuomotor Control and Planning},
  author={Kim, Moo Jin and Gao, Yihuai and Lin, Tsung-Yi and Lin, Yen-Chen and Ge, Yunhao and Lam, Grace and Liang, Percy and Song, Shuran and Liu, Ming-Yu and Finn, Chelsea and Gu, Jinwei},
  booktitle={International Conference on Learning Representations (ICLR)},
  year={2026}
}

@article{intelligence2025pi_,
  title={{$\pi_{0.5}$}: a {Vision-Language-Action} Model with {Open-World} Generalization},
  author={Intelligence, Physical and Black, Kevin and Brown, Noah and Darpinian, James and Dhabalia, Karan and Driess, Danny and Esmail, Adnan and Equi, Michael and Finn, Chelsea and Fusai, Niccolo and others},
  journal={arXiv preprint arXiv:2504.16054},
  year={2025}
}

@article{team2025gemini,
  title={Gemini robotics: Bringing ai into the physical world},
  author={Team, Gemini Robotics and Abeyruwan, Saminda and Ainslie, Joshua and Alayrac, Jean-Baptiste and Arenas, Montserrat Gonzalez and Armstrong, Travis and Balakrishna, Ashwin and Baruch, Robert and Bauza, Maria and Blokzijl, Michiel and others},
  journal={arXiv preprint arXiv:2503.20020},
  year={2025}
}

@article{nakamura2025generalizing,
  title={Generalizing safety beyond collision-avoidance via latent-space reachability analysis},
  author={Nakamura, Kensuke and Peters, Lasse and Bajcsy, Andrea},
  journal={arXiv preprint arXiv:2502.00935},
  year={2025}
}

@inproceedings{hafner2019learning,
  title={Learning latent dynamics for planning from pixels},
  author={Hafner, Danijar and Lillicrap, Timothy and Fischer, Ian and Villegas, Ruben and Ha, David and Lee, Honglak and Davidson, James},
  booktitle={International conference on machine learning},
  pages={2555--2565},
  year={2019},
  organization={PMLR}
}

@inproceedings{ames2019control,
  title={Control barrier functions: Theory and applications},
  author={Ames, Aaron D and Coogan, Samuel and Egerstedt, Magnus and Notomista, Gennaro and Sreenath, Koushil and Tabuada, Paulo},
  booktitle={2019 18th European control conference (ECC)},
  pages={3420--3431},
  year={2019},
  organization={Ieee}
}

@inproceedings{bansal2017hamilton,
  title={Hamilton-jacobi reachability: A brief overview and recent advances},
  author={Bansal, Somil and Chen, Mo and Herbert, Sylvia and Tomlin, Claire J},
  booktitle={2017 IEEE 56th annual conference on decision and control (CDC)},
  pages={2242--2253},
  year={2017},
  organization={IEEE}
}

@inproceedings{abbeel2006using,
  title={Using inaccurate models in reinforcement learning},
  author={Abbeel, Pieter and Quigley, Morgan and Ng, Andrew Y},
  booktitle={Proceedings of the 23rd international conference on Machine learning},
  pages={1--8},
  year={2006}
}

@article{berger2026biased,
  title={Biased Dreams: Limitations to Epistemic Uncertainty Quantification in Latent Space Models},
  author={Berger, Julia and Frauenknecht, Bernd and Trimpe, Sebastian and Leibe, Bastian},
  journal={arXiv preprint arXiv:2604.25416},
  year={2026}
}

@article{ward2026foundational,
  title={Foundational world models accurately detect bimanual manipulator failures},
  author={Ward, Isaac R and Ho, Michelle and Liu, Houjun and Feldman, Aaron and Vincent, Joseph and Kruse, Liam and Cheong, Sean and Eddy, Duncan and Kochenderfer, Mykel J and Schwager, Mac},
  journal={arXiv preprint arXiv:2603.06987},
  year={2026}
}

@inproceedings{fisac2019bridging,
  title={Bridging hamilton-jacobi safety analysis and reinforcement learning},
  author={Fisac, Jaime F and Lugovoy, Neil F and Rubies-Royo, Vicen{\c{c}} and Ghosh, Shromona and Tomlin, Claire J},
  booktitle={2019 International Conference on Robotics and Automation (ICRA)},
  pages={8550--8556},
  year={2019},
  organization={IEEE}
}

@article{hafner2023mastering,
  title={Mastering diverse domains through world models},
  author={Hafner, Danijar and Pasukonis, Jurgis and Ba, Jimmy and Lillicrap, Timothy},
  journal={arXiv preprint arXiv:2301.04104},
  year={2023}
}

@article{an2026feedback,
  title={Feedback world model enables precise guidance of diffusion policy},
  author={An, Tuo and Jia, Jindou and Li, Gen and Li, Jingliang and Zhou, Chuhao and Liu, Pengfei and Lyu, Bofan and Bai, Jiaqi and Guo, Xinying and Li, Geng and others},
  journal={arXiv preprint arXiv:2605.15705},
  year={2026}
}

@article{hansen2026hallucination,
  title={Hallucination in World Models is Predictable and Preventable},
  author={Hansen, Nicklas and Wang, Xiaolong},
  journal={arXiv preprint arXiv:2606.27326},
  year={2026}
}

@article{li2025robotic,
  title={Robotic world model: A neural network simulator for robust policy optimization in robotics},
  author={Li, Chenhao and Krause, Andreas and Hutter, Marco},
  journal={arXiv preprint arXiv:2501.10100},
  year={2025}
}

\section{Appendix}
\textbf{Proof of Theorem \ref{thm_1}.} We begin by noting the equivalence of the events
\begin{align}\label{event_equiv}
    \{Q_{\tau}^{\alpha_\text{traj}} \leq q_\text{max}\} = \{\frac{1}{T}\sum_{t=1}^T\mathbf{1}[e_t > q_\text{max}] \leq \alpha_\text{traj}\}.
\end{align}
From Proposition \ref{prop:aci_coverage}, we have that with probability $1$
\begin{align}\label{aci_error_guarantee}
    \frac{1}{T}\sum_{t=1}^T \mathbf{1}[e_t >q_t] - \alpha \leq r_T.
\end{align}
Combining \eqref{event_equiv} and \eqref{aci_error_guarantee} gives
\begin{align}\label{proof_step}
    &\mathbb{P}\left( \frac{1}{T}\sum_{t=1}^T \mathbf{1}[e_t > q_t] + \mathbf{1}[e_t >q_\text{max}] \leq \alpha + \alpha_\text{traj} + r_T\right) \geq \notag \\
    &\geq 1 - \alpha_\text{cal}.
\end{align}
Lastly, the inequality $\mathbf{1}[e_t > \bar{q}_t] \leq \mathbf{1}[e_t > q_t] + \mathbf{1}[e_t > q_\text{max}]$ allows us to upper-bound the left-hand side of \eqref{proof_step} with
\begin{align}
    \mathbb{P}\left( \frac{1}{T}\sum_{t=1}^T \mathbf{1}[e_t > \bar{q}_t] \leq \alpha + \alpha_\text{traj} + r_T\right),
\end{align}
giving us \eqref{thm1_ineq}. \hfill $\blacksquare$

\end{document}